\newcommand{\ucambridge}{\emoji[twitter]{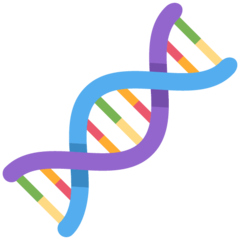}}
\newcommand{\ethz}{\emoji[twitter]{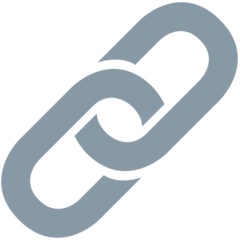}}
\newcommand{\jhu}{\emoji[twitter]{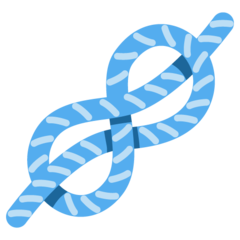}}
\author{
{Ran Zmigrod\raise1.0ex\hbox{\normalfont\ucambridge}\raise1.0ex\hbox{\normalfont}}~\;~Tim Vieira\raise1.0ex\hbox{\normalfont\jhu}~\;~Ryan Cotterell\raise1.0ex\hbox{\normalfont\ucambridge,\ethz}
\\
  \raise1.0ex\hbox{\normalfont\ucambridge}University of Cambridge~\;~\raise1.0ex\hbox{\normalfont\jhu}Johns Hopkins University~\;~\raise1.0ex\hbox{\normalfont\ethz}ETH Z\"{u}rich \\
  \texttt{rz279@cam.ac.uk}~\;~\texttt{tim.f.vieira@gmail.com} \\ \texttt{ryan.cotterell@inf.ethz.ch}
}
\date{}
\newtheorem{thm}{Theorem}
\newtheorem{cor}{Corollary}
\newtheorem{lemma}{Lemma}
\newtheorem{definition}{Definition}
\theoremstyle{definition}
\crefname{section}{\S}{\S\S}
\Crefname{section}{\S}{\S\S}
\crefname{table}{Tab.}{}
\crefname{figure}{Fig.}{}
\crefname{algorithm}{Alg}{}
\crefname{algorithm}{Alg}{}
\crefname{line}{Line}{}
\crefname{appendix}{App.}{}
\crefname{thm}{Theorem}{}
\crefname{prop}{Proposition}{}
\crefname{defin}{Definition}{}
\crefname{lemma}{Lemma}{}
\crefname{cor}{Corollary}{}
\crefname{equation}{}{}
\algnewcommand{\parState}[1]{\State%
    \parbox[t]{\dimexpr\linewidth-\algmargin}{\strut\hangindent=\algorithmicindent \hangafter=1 #1\strut}}
\algrenewcommand\algorithmicindent{1.0em}%
\newcommand{\algorithmicdowhile}{\textbf{do}:}
\newcommand{\algorithmicfunc}[1]{\textbf{def} #1 :}
\newif\ifboldnumber
\algrenewcommand\alglinenumber[1]{%
  \footnotesize\ifboldnumber\color{red}\bfseries\fi\global\boldnumberfalse#1:}
\newcommand{\rightcomment}[1]{{\color{gray} \(\triangleright\) {\footnotesize\textit{#1}}}}
\algrenewcommand{\algorithmiccomment}[1]{\hfill \rightcomment{#1}}  
\algnewcommand{\LineComment}[1]{\State \rightcomment{#1}}
\algnewcommand{\LinesComment}[1]{\State \rightcomment{\parbox[t]{\linewidth-\leftmargin-\widthof{\(\triangleright\) }}{#1}}}
\renewcommand\algorithmicthen{:}
\algnewcommand{\IIf}[1]{\State\algorithmicif\ #1\ \algorithmicthen}
\algnewcommand{\EndIIf}{\unskip}
\renewenvironment{proof}[1][\proofname]{\par
  \vspace{-0.5\topsep}
  \pushQED{\qed}%
  \normalfont
  \topsep0pt \partopsep0pt 
  \trivlist
  \item[\hskip\labelsep
        \itshape
    #1\@addpunct{.}]\ignorespaces
}{%
  \popQED\endtrivlist\@endpefalse
  \addvspace{5pt plus 5pt} 
}
\newcommand{\defn}[1]{\textbf{#1}}
\renewcommand{\th}[0]{^{\text{th}}}
\renewcommand{\bar}[1]{\overline{#1}}
\newcommand{\checkNotation}[1]{{#1}}
\newcommand{\prob}{\checkNotation{p}}
\newcommand{\nN}{\checkNotation{N}}
\newcommand{\real}{\checkNotation{\mathbb{R}}}
\newcommand{\realnn}{\checkNotation{\mathbb{R}_{\geq 0}}}
\newcommand{\plusequal}{{\,\textsf{+=}\,}}
\newcommand{\expect}[2]{\checkNotation{\mathbb{E}_{#1}\!\left[ {#2} \right]}}
\newcommand{\Z}{\checkNotation{\mathrm{Z}}}
\newcommand{\mat}[1]{\checkNotation{\mathbf{#1}}}
\newcommand{\matrixelem}[2]{\checkNotation{\mathrm{#1}_{#2}}}
\newcommand{\eye}{\mat{I}}
\newcommand{\inv}[1]{#1^{-1}}
\newcommand{\wfsa}{\checkNotation{\mathcal{M}}}
\newcommand{\vstart}{\checkNotation{\boldsymbol \alpha}}
\newcommand{\wstart}[1]{\checkNotation{\alpha_{#1}}}
\newcommand{\vend}{\checkNotation{\boldsymbol \omega}}
\newcommand{\wend}[1]{\checkNotation{\omega_{#1}}}
\newcommand{\alphabet}{\checkNotation{\mathcal{A}}}
\newcommand{\tuple}[1]{\checkNotation{{\langle #1 \rangle}}}
\newcommand{\runs}{\checkNotation{\mathcal{T}}}
\newcommand{\trajectory}[2]{\checkNotation{\tau_{#1 \rightsquigarrow #2}}}
\newcommand{\trajset}[2]{\checkNotation{\runs_{#1#2}}}
\newcommand{\wfsaW}{\checkNotation{\mat{W}}}
\newcommand{\wfsaWelempow}[2]{\checkNotation{\mathrm{W}_{#1}^{#2}}}
\newcommand{\transpose}[0]{^{\!\top}}
\newcommand{\kleene}[1]{#1^{\star}}
\newcommand{\wstar}{\kleene{\wfsaW}}
\newcommand{\wstarelem}[1]{\wfsaWelempow{#1}{\star}}
\newcommand{\perms}[1]{\checkNotation{\mathcal{S}_{#1}}}
\newcommand{\symwfsaW}[1]{\checkNotation{\wfsaW^{(#1)}}}
\newcommand{\symwfsaWelem}[2]{\checkNotation{\mathrm{W}^{(#1)}_{#2}}}
\newcommand*{\dt}[1]{%
  \accentset{\mbox{\large\bfseries .}}{#1}}
\newcommand{\hole}[3]{\checkNotation{{\dt{\mathrm{W}}^{(#3)}_{#1#2}}}}
\newcommand{\wfsatuple}{\tuple{\vstart, \{\symwfsaW{a} \}_{a \in \bar{\alphabet}}, \vend}}
\newcommand{\tran}[3]{\checkNotation{#1\xrightarrow{#3} #2}}
\newcommand{\edgetuple}{\vec{\tau}}
\newcommand{\funcSig}[3]{#1{:}\,\,#2 \mapsto #3}
\newcommand{\defeq}[0]{\overset{\smaller\mathrm{def}}{=}}
\newcommand{\stepeq}[1]{\overset{\smaller\mathrm{(#1)}}{=}}
\newcommand{\step}[1]{$\mathrm{(#1)}$}
\newcommand{\bigo}[1]{\mathcal{O}(#1)}
\newcommand{\abs}[1]{\lvert #1 \rvert}
\newcommand{\str}{\sigma}
\newcommand{\yield}[1]{\gamma\!\left( #1 \right)}
\newcommand{\alphsize}{\checkNotation{A}}
\newcommand{\timvCut}[1]{}
\newcommand{\saveforcameraready}[1]{\timv{#1}}
\renewcommand{\saveforcameraready}[1]{}  
\newcommand{\monstersum}[1]{{\sum_{\mathclap{\substack{#1}}}}}  
\newcommand{\bigmonstersum}[1]{{\mathlarger\sum_{\mathclap{\substack{#1}}}}}  
\newcommand{\monsterprod}[1]{{\prod_{\mathclap{\substack{#1}}}}}  
\newcommand{\monstersumtop}[2]{{\sum_{\mathclap{\substack{#1}}}^{#2}}}  
\newcommand{\monsterprodtop}[2]{{\prod_{\mathclap{\substack{#1}}}^{#2}}}  
\newcolumntype{C}{>{\centering\arraybackslash}X}
\newcommand{\algFace}[1]{\texttt{#1}}
\newcommand{\algCall}[2]{#1\!\left( #2 \right)}
\newcommand{\vstarStart}{\checkNotation{\mathbf{s}}}
\newcommand{\vstarEnd}{\checkNotation{\mathbf{e}}}
\newcommand{\wstarStart}[1]{\checkNotation{\matrixelem{s}{#1}}}
\newcommand{\wstarEnd}[1]{\checkNotation{\matrixelem{e}{#1}}}
\newcommand{\pair}[1]{\checkNotation{\tran{i_{#1}}{j_{#1}}{a_{#1}}}}
\newcommand{\pairp}[1]{\checkNotation{\tran{i'_{#1}}{j'_{#1}}{a'_{#1}}}}
\newcommand{\pairij}[0]{\checkNotation{\tran{i}{j}{a}}}
\newcommand{\algDm}[0]{\algFace{D}_m}
\newcommand{\algSecond}[0]{\algFace{E}_2}
\newcommand{\case}[1]{\noindent\emph{#1:}}
\title{Higher-order Derivatives of Weighted Finite-state Machines}
\begin{document}
\maketitle

\begin{abstract}
Weighted finite-state machines are a fundamental building block of NLP systems.
They have withstood the test of time---from their early use in noisy channel models in the 1990s up to modern-day neurally parameterized conditional random fields. 
This work examines the computation of higher-order derivatives with respect to the normalization constant for weighted finite-state machines. 
We provide a general algorithm for evaluating derivatives of all orders, which has not  been previously described in the literature. 
In the case of second-order derivatives, our scheme runs in the \emph{optimal} $\bigo{\alphsize^2\nN^4}$ time where $\alphsize$ is the alphabet size and $\nN$ is the number of states. Our algorithm is significantly faster than prior algorithms. 
Additionally, our approach leads to a significantly faster algorithm for computing second-order expectations, such as covariance matrices and gradients of first-order expectations.\looseness=-1
\end{abstract}

\section{Introduction}
Weighted finite-state machines (WFSMs) have a storied
role in NLP.
They are a useful formalism for speech recognition \cite{mohri-2002}, machine transliteration \citep{knight-98}, morphology \cite{geyken-2005, linden-2009} and phonology \citep{cotterell-etal-2015-modeling} \textit{inter alia}.
Indeed, WFSMs have been ``neuralized'' \citep{rastogi-etal-2016-weighting, hannun-iclr, schwartz-etal-2018-bridging}  and are still of practical use to the NLP modeler. 
Moreover, many popular sequence models, e.g., conditional random fields for part-of-speech tagging \citep{lafferty-2001}, are naturally viewed as special cases of WFSMs.
For this reason, we consider the study of algorithms for the WFSMs of interest \textit{in se} for the NLP community.

This paper considers inference algorithms for WSFMs.
When WFSMs are acyclic, there exist simple linear-time dynamic programs, e.g., the forward algorithm \citep{rabiner1989tutorial}, for inference.
However, in general, WFSMs may contain cycles and such approaches are not applicable.
Our work considers this general case and provides a method for efficient computation of $m^{\text{th}}$-order derivatives
over a cyclic WFSM.
To the best of our knowledge,
no algorithm for higher-order derivatives has been presented in the literature beyond a general-purpose method from automatic differentiation.
In contrast to many presentations of WFSMs \citep{mohri-1997}, 
our work provides a purely linear-algebraic take on them.
And, indeed, it is this connection that allows us to develop our general algorithm. 

We provide a thorough analysis of 
the soundness, runtime, and space complexity of our algorithm.
In the special case of second-order derivatives,
our algorithm runs \emph{optimally} in $\bigo{\alphsize^2\nN^4}$ time and space
where $\alphsize$ is the size of the alphabet,
and $\nN$ is the number of states.\footnote{Our implementation is available at \url{https://github.com/rycolab/wfsm}.}  
In contrast, the second-order expectation semiring of \citet{li-eisner-2009} provides an $\bigo{\alphsize^2\nN^7}$ solution 
and automatic differentiation \cite{griewank-1989} yields a slightly faster $\bigo{\alphsize\nN^5 \!+\! \alphsize^2\nN^4}$ solution.
Additionally, we provide a speed-up for the general family of second-order expectations.
Indeed, we believe our algorithm is the fastest known for computing common quantities, e.g., a covariance matrix.\footnote{Due to space constraints, we keep the discussion of our paper theoretical, though applications of expectations that we can compute are discussed in \citet{li-eisner-2009}, \citet{sanchezR20}, and \citet{zmigrod-2021-tacl}.}

\section{Weighted Finite-State Machines}
In this section we briefly provide important notation for WFSMs and a classic result that efficiently finds the normalization constant for the probability distribution of a WFSM.

\begin{definition}[]
A \defn{weighted finite-state machine} $\wfsa$ is a tuple $\wfsatuple$
where $\alphabet$ is an alphabet of size $\alphsize$, $\bar{\alphabet}\defeq\alphabet\cup\{\varepsilon\}$,
each $a\!\in\!\bar{\alphabet}$ has a symbol-specific transition matrix $\symwfsaW{a}\!\in\! \realnn^{\nN \times \nN}$ where $\nN$ is the number of states,
and $\vstart,\vend \in \realnn^\nN$ are column vectors of start and end weights, respectively.
We define the matrix $\wfsaW\!\defeq\!\sum_{a\in\bar{\alphabet}}\symwfsaW{a}$.
\end{definition} 

\begin{definition}[]
A \defn{trajectory} $\trajectory{i}{\ell}$ 
is an ordered sequence of transitions from state $i$ to state $\ell$.
Visually, we can represent a trajectory by
\begin{equation*}
    \trajectory{i}{\ell}\defeq \tran{i}{j}{a}\cdots \tran{k}{\ell}{a'}
\end{equation*}
The \defn{weight} of a trajectory is
\begin{equation}\label{eq:weight}
    w(\trajectory{i}{\ell})\defeq\wstart{i}\left(\ \ \ \ \ \ \monsterprod{(\tran{j}{k}{a})\in\trajectory{i}{\ell}}\ \symwfsaWelem{a}{jk}\right)\wend{\ell}
\end{equation}
We denote the (possibly infinite) set of trajectories from $i$ to $\ell$ by $\trajset{i}{\ell}$ and the set of all trajectories by $\runs\defeq\bigcup_{i,\ell \in [\nN]}\trajset{i}{\ell}$.\footnote{$\abs{\runs}$ is infinite if and only if $\wfsa$ is cyclic.}
Consequently, when we say $\trajectory{i}{\ell}\in\runs$, we make $i$ and $\ell$ implicit arguments to which $\trajset{i}{\ell}$ we are accessing.
\end{definition}

\noindent We define the \defn{probability} of a trajectory $\trajectory{i}{\ell} \!\in \!\runs$,
\begin{equation}
 p(\trajectory{i}{\ell}) 
 \defeq \frac{w(\trajectory{i}{\ell})}{\Z}
\end{equation}
where
\begin{equation}
 \Z \defeq \vstart\transpose \sum_{k=0}^{\infty}\wfsaW^{k}\, \vend\ 
\end{equation}\footnotetext{Another formulation for $\Z$ is $\sum_{\trajectory{i}{\ell}\in\runs}w(\trajectory{i}{\ell})$.}{Of} course, $p$ is only well-defined when $0\!<\!\Z\!<\!\infty$.\footnote{This requirement is equivalent to $\wfsaW$ having a spectral radius $<1$.}
Intuitively, $\vstart\transpose \wfsaW^{k}\, \vend$ 
is the total weight of all trajectories of length $k$.
Thus, $\Z$ is the total weight of all possible trajectories as it sums over the total weight for each possible trajectory length.
\begin{thm}[Corollary 4.2, \citet{lehmann-1977}]\label{thm:kleene}
\begin{equation}\label{eq:wstar}
    \wstar\defeq\sum_{k=0}^{\infty}\wfsaW^{k}=\left(\eye-\wfsaW\right)^{-1}
\end{equation}
\end{thm}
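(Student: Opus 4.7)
The plan is to prove this via the classical Neumann-series argument, using the spectral-radius condition noted in the preceding footnote ($\rho(\wfsaW)<1$, equivalent to $0<\Z<\infty$).

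First I would show invertibility of $\eye - \wfsaW$. Because $\rho(\wfsaW) < 1$, the matrix $\wfsaW$ cannot have $1$ as an eigenvalue, so $\eye - \wfsaW$ has no zero eigenvalue and is therefore nonsingular. This is the ingredient that makes the right-hand side of \eqref{eq:wstar} well defined.

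Next I would consider the $n$-th partial sum $\mathbf{S}_n \defeq \sum_{k=0}^{n}\wfsaW^{k}$ and compute the telescoping identity
\begin{equation*}
    (\eye - \wfsaW)\,\mathbf{S}_n \;=\; \sum_{k=0}^{n}\wfsaW^{k} - \sum_{k=1}^{n+1}\wfsaW^{k} \;=\; \eye - \wfsaW^{n+1}.
\end{equation*}
Since $\rho(\wfsaW)<1$, we have $\wfsaW^{n+1}\to \zerovector$ entrywise as $n\to\infty$ (e.g.\ by Gelfand's formula, $\|\wfsaW^{n+1}\|^{1/(n+1)}\to \rho(\wfsaW)<1$ in any submultiplicative matrix norm). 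In particular the sequence $\mathbf{S}_n$ is Cauchy and hence converges to a finite matrix, which is precisely $\wstar$ as defined in the statement.

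Finally, taking $n\to\infty$ on both sides of the telescoping identity gives $(\eye - \wfsaW)\,\wstar = \eye$, and then left-multiplying by $(\eye - \wfsaW)^{-1}$ yields $\wstar = (\eye - \wfsaW)^{-1}$, completing the proof. The only mildly delicate step is the convergence argument $\wfsaW^{n+1}\to\zerovector$; I would handle this by invoking Gelfand's formula (or, equivalently, by putting $\wfsaW$ in Jordan form and observing that each Jordan block with eigenvalue $|\lambda|<1$ has powers tending to zero), since everything else is algebraic manipulation.
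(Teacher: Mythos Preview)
Your argument is the standard Neumann-series proof and is correct. Note, however, that the paper does not actually supply a proof of this theorem: it simply states the identity and attributes it to \citet{lehmann-1977} (Corollary~4.2), so there is no ``paper's own proof'' to compare against. Your write-up would serve perfectly well as a self-contained justification of the cited result under the spectral-radius assumption $\rho(\wfsaW)<1$ that the paper records in the footnote.
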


\noindent Thus, we can solve the infinite summation that defines $\wstar$ by matrix inversion in $\bigo{\nN^3}$ time.\footnote{This solution technique may be extended to closed semirings \citep{stephen1956kleene, lehmann-1977}.}
\begin{cor}\label{thm:z}

\begin{equation}
\Z=\vstart^{\top}\wstar\, \vend
\end{equation}
\end{cor}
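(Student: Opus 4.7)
The plan is essentially a one-line substitution: starting from the definition $\Z \defeq \vstart^\top \sum_{k=0}^\infty \wfsaW^k\, \vend$, I would pull the start and end vectors out of the (convergent) infinite sum and replace the remaining series $\sum_{k=0}^\infty \wfsaW^k$ by $\wstar$ using \Cref{thm:kleene}. That yields $\Z = \vstart^\top \wstar\, \vend$ directly.

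The only non-cosmetic step is verifying that the interchange of $\vstart^\top$, $\vend$, and the limit defining the series is legitimate. This is where I would lean on the well-definedness hypothesis $0 < \Z < \infty$, which (as noted in the footnote) is equivalent to $\wfsaW$ having spectral radius strictly less than $1$; that convergence condition is exactly what makes both $\sum_{k=0}^\infty \wfsaW^k$ and the Neumann series identity $(\eye - \wfsaW)^{-1} = \sum_{k=0}^\infty \wfsaW^k$ valid, so \Cref{thm:kleene} applies and the substitution is sound. Bilinearity of the map $\mat{M} \mapsto \vstart^\top \mat{M}\, \vend$ then lets one exchange the sum and the vector products entry-wise.

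There is no real obstacle here: the corollary is immediate from \Cref{thm:kleene} combined with the definition of $\Z$. The only thing worth flagging in the write-up is the cost of the resulting computation, namely that evaluating $\wstar$ via the matrix inversion $(\eye - \wfsaW)^{-1}$ reduces the infinite sum to an $\bigo{\nN^3}$ computation, which is the reason the corollary is being stated as a stepping stone for the higher-order derivative algorithms developed later in the paper.
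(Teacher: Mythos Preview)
Your proposal is correct and matches the paper's own proof, which is the one-line remark ``Follows from \cref{eq:wstar} in \cref{thm:kleene}.'' Your additional discussion of the spectral-radius condition justifying the limit interchange is a reasonable elaboration, but the core argument is identical.
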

\begin{proof}
Follows from \cref{eq:wstar} in \cref{thm:kleene}.
\end{proof}
\noindent By \cref{thm:z}, we can find $\Z$ in $\bigo{\nN^3+\alphsize\nN^2}$.\footnote{Throughout this paper, 
we assume
a dense weight matrix and
that matrix inversion is $\bigo{\nN^3}$ time.
We note, however, that when the weight matrix is sparse and structured,
faster matrix-inversion algorithms exist that 
exploit the strongly connected components decomposition of the graph \citep{mohri-2000}.
We are agnostic to the specific inversion algorithm, 
but for simplicity we assume the aforementioned running time.
}

\paragraph{Strings versus Trajectories.}
Importantly, WFSMs can be regarded as weighted finite-state acceptors (WFSAs) which accept strings as their input. 
Each trajectory $\trajectory{i}{\ell}\in\runs$ has a \defn{yield} $\yield{\trajectory{i}{\ell}}$ which is the concatenation of the alphabet symbols of the trajectory.
The yield of a trajectory ignores any $\varepsilon$ symbols, a discussion regarding the semantics of $\varepsilon$ is given in \citet{hopcroft-2001}. As we focus on distributions over trajectories, we do not need special considerations for $\varepsilon$ transitions.
We do not consider distributions over yields in this work as
such a distribution requires constructing a latent-variable model
\begin{equation}
    p(\str) = \frac{1}{\Z}\ \,\monstersum{\trajectory{i}{\ell} \in \runs, \\
     \yield{\trajectory{i}{\ell}}=\str}\,\, w(\trajectory{i}{\ell})
\end{equation}
where $\str \in \alphabet^*$ and $\yield{\trajectory{i}{\ell}}$ is the yield of the trajectory. 
While marginal likelihood can be found efficiently,\footnote{This is done by intersecting the WFSA with another WFSA that only accepts $\sigma$.} many quantities, such as the entropy of the distribution over yields, are intractable to compute \citep{cortes2008computation}.

\section{Computing the Hessian (and Beyond)}\label{sec:hess}
In this section, we explore algorithms for efficiently computing the Hessian matrix $\nabla^2\Z$. We briefly describe two inefficient algorithms, which are derived by forward-mode and reverse-mode automatic differentiation.  Next, we propose an efficient algorithm which is based on a key differential identity.

\subsection{An $\bigo{\alphsize^2\nN^7}$ Algorithm with Forward-Mode Automatic Differentiation}
One proposal for computing the Hessian comes from \citet{li-eisner-2009} who
introduce a method based on semirings for computing a general family of quantities known as second-order expectations (defined formally in \cref{sec:expect}).
When applied to the computation of the Hessian their method reduces precisely to forward-mode automatic differentiation \citep[AD;][Chap 3.1]{griewank-walther}.
This approach requires that we ``lift'' the computation of $\Z$ to operate over a richer numeric representation known as \emph{dual numbers} \citep{clifford1871preliminary, pearlmutterS07}.  Unfortunately, the second-order dual numbers that we require to compute the Hessian introduce an overhead of $\bigo{\alphsize^2\nN^4}$ per numeric operation of the $\bigo{\nN^3}$ algorithm that computes $\Z$, which results in $\bigo{\alphsize^2\nN^7}$ time.

\subsection{An $\bigo{\alphsize\nN^5\!+\!\alphsize^2\nN^4}$ Algorithm with Reverse-Mode Automatic Differentiation}\label{sec:ad}
Another method for materializing the Hessian $\nabla^2\Z$ is through reverse-mode automatic differentiation (AD).
Recall that we can compute $\Z$ in $\bigo{\nN^3\!+\!\alphsize\nN^2}$,
and can consequently find $\nabla\Z$ in $\bigo{\nN^3\!+\!\alphsize\nN^2}$ 
using one pass of reverse-mode AD \citep[Chapter~3.3]{griewank-walther}.
We can repeat differentiation to materialize $\nabla^2\Z$.  Specifically, we run reverse-mode AD once for each element $i$ of $\nabla\Z$.  Taking the gradient of $(\nabla\Z)_i$ gives a row of the Hessian matrix, $\nabla[ (\nabla\Z)_i ] = [\nabla^2\Z]_{(i,:)}$. 
Since each of these passes takes time $\bigo{\nN^3\!+\!\alphsize\nN^2}$ (i.e., the same as the cost of $\Z$), and $\nabla\Z$ has size $\alphsize\nN^2$, 
the overall time is $\bigo{\alphsize\nN^5\!+\!\alphsize^2\nN^4}$.

\subsection{Our Optimal $\bigo{\alphsize^2\nN^4}$ Algorithm}
In this section, we will provide an $\bigo{\alphsize^2\nN^4}$-time and space algorithm for computing the Hessian.
Since the Hessian has size $\bigo{\alphsize^2\nN^4}$, 
no algorithm can run faster than this bound;
thus, our algorithm's time and space complexities are \emph{optimal}.
Our algorithm hinges on the following lemma, which shows that the each of partial derivatives of $\wstar$ can be cheaply computed given $\wstar$.

\begin{lemma}\label{lem:dwstar}
For $i,j,k,\ell\!\in\![N]$ and $a\!\in\!\bar{\alphabet}$
\begin{equation}
    \frac{\partial\wstarelem{i\ell}}{\partial\symwfsaWelem{a}{jk}}
    =
    \wstarelem{ij} \hole{j}{k}{a} \wstarelem{k\ell}
\end{equation}
where $\hole{j}{k}{a}$ is shorthand for $\partial \symwfsaWelem{a}{jk}$.
\end{lemma}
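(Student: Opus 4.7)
The plan is to combine \cref{thm:kleene} with the well-known matrix-calculus formula for differentiating a matrix inverse. By \cref{thm:kleene} we have $\wstar = (\eye - \wfsaW)^{-1}$, so I would differentiate the defining identity
\begin{equation*}
    (\eye - \wfsaW)\,\wstar = \eye
\end{equation*}
with respect to the scalar parameter $\symwfsaWelem{a}{jk}$ and rearrange, using that $\wstar$ is a left inverse of $\eye - \wfsaW$, to obtain
\begin{equation*}
    \frac{\partial \wstar}{\partial \symwfsaWelem{a}{jk}} = \wstar \,\frac{\partial \wfsaW}{\partial \symwfsaWelem{a}{jk}}\, \wstar.
\end{equation*}
This is really where all the content of the lemma sits; the rest is bookkeeping.

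Next I would evaluate $\partial \wfsaW / \partial \symwfsaWelem{a}{jk}$ explicitly. Because $\wfsaW = \sum_{a'\in\bar{\alphabet}}\symwfsaW{a'}$, the scalar $\symwfsaWelem{a}{jk}$ appears only at position $(j,k)$ of the summand $\symwfsaW{a}$, so this derivative is the elementary matrix carrying a single $1$ at $(j,k)$ and zeros elsewhere, which is exactly what $\hole{j}{k}{a}$ abbreviates.

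Finally, I would read off the $(i,\ell)$ entry of the product $\wstar \cdot (\partial \wfsaW / \partial \symwfsaWelem{a}{jk}) \cdot \wstar$. The elementary-matrix ``sandwich'' collapses the double sum over inner indices to a single term: the only surviving contribution multiplies column $j$ of the left $\wstar$ by row $k$ of the right $\wstar$, giving $\wstarelem{ij}\hole{j}{k}{a}\wstarelem{k\ell}$, which is the claimed identity.

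I do not foresee a genuine obstacle. The only subtlety is justifying the differentiation of the inverse: this is valid because the assumption $0<\Z<\infty$ forces the spectral radius of $\wfsaW$ to be strictly less than one (as noted in the paper's footnote), so $\eye - \wfsaW$ is invertible in an open neighbourhood of $\wfsaW$ and $\wstar$ is a smooth (indeed rational) function of the entries of $\wfsaW$ there. Everything else is mechanical matrix algebra.
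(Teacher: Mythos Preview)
Your proposal is correct and follows essentially the same route as the paper's own proof: both invoke \cref{thm:kleene} to write $\wstar = (\eye-\wfsaW)^{-1}$ and then apply the standard derivative-of-a-matrix-inverse identity (the paper cites the Matrix Cookbook, whereas you rederive it from $(\eye-\wfsaW)\wstar=\eye$), with the sign flip from $\partial(\eye-\wfsaW)=-\partial\wfsaW$ producing the final expression. Your additional remarks on the elementary-matrix structure of $\partial\wfsaW/\partial\symwfsaWelem{a}{jk}$ and on smoothness are sound and simply make explicit what the paper leaves implicit.
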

\begin{proof}
\begin{align*}
\frac{\partial\wstarelem{i\ell}}{\partial\symwfsaWelem{a}{jk}} 
&= \frac{\partial}{\partial\symwfsaWelem{a}{jk}}\left[\inv{(\eye-\wfsaW)}_{i\ell}\right] \\
&= - \wstarelem{ij} \frac{\partial}{\partial\symwfsaWelem{a}{jk}}\left[(\eye-\wfsaW)\right] \wstarelem{k\ell} \\
&= \wstarelem{ij} \hole{j}{k}{a} \wstarelem{k\ell}
\end{align*}
The second step uses Equation 40 of the Matrix Cookbook \citep{matrix-cookbook}.
\end{proof}

We now extend \cref{lem:dwstar} to express higher-order derivatives in terms of $\wstar$.
Note that as in \cref{lem:dwstar}, we will use $\hole{i}{j}{a}$ as a shorthand for the partial derivative $\partial \symwfsaWelem{a}{ij}$.

\begin{thm}\label{thm:higher}
For $m\!\ge\!1$ and $m$-tuple of transitions $\edgetuple=\tuple{\pair{1},\dots,\pair{m}}$
\begin{align}\label{eq:order-m}
    &\frac{\partial^{m}\Z}{\partial\symwfsaWelem{a_1}{i_1j_1}\cdots\partial\symwfsaWelem{a_m}{i_mj_m}} = \bigmonstersum{\vspace{4pt}\\\big\langle\pairp{1},\cdots,\pairp{m}\big\rangle \in \perms{\edgetuple}} \\
    & 
    \wstarStart{i'_1}
    \hole{i'_1}{j'_1}{a'_1}
    \wstarelem{j'_1 i'_2}
    \hole{i'_2}{j'_2}{a'_2} 
    \cdots \wstarelem{j'_{m-1} i'_m}
    \hole{i'_m}{j'_m}{a'_m}
    \wstarEnd{j'_m}
    \nonumber
\end{align}
where $\vstarStart=\vstart\transpose\wstar$, 
$\vstarEnd=\wstar\vend$
and $\perms{\edgetuple}$ is the multi-set of permutations of $\edgetuple$.\footnote{As $\edgetuple$ may have duplicates, $\perms{\edgetuple}$
can also have duplicates and so
must be a multi-set.}
\end{thm}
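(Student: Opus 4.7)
The plan is to induct on $m$, using \cref{lem:dwstar} as the atomic differentiation rule. For the base case $m=1$, I would apply $\partial/\partial\symwfsaWelem{a}{ij}$ to $\Z=\vstart^\top\wstar\vend$ componentwise; \cref{lem:dwstar} together with the definitions $\vstarStart=\vstart^\top\wstar$ and $\vstarEnd=\wstar\vend$ collapses the contractions to $\vstarStart{i}\hole{i}{j}{a}\vstarEnd{j}$, which is exactly the single-permutation instance of \cref{eq:order-m}.

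For the inductive step, assume \cref{eq:order-m} at order $m$ and differentiate both sides with respect to the new variable $\symwfsaWelem{a_{m+1}}{i_{m+1}j_{m+1}}$. Each summand has the alternating ``backbone--hole'' shape
$$\vstarStart{i'_1}\,H_1\,\wstarelem{j'_1 i'_2}\,H_2\,\cdots\,\wstarelem{j'_{m-1} i'_m}\,H_m\,\vstarEnd{j'_m},$$
in which the $m$ hole factors $H_k=\hole{i'_k}{j'_k}{a'_k}$ are independent of $\wfsaW$, so only the $m+1$ backbone factors survive the product rule. Differentiating a middle backbone $\wstarelem{j'_{k-1}i'_k}$ via \cref{lem:dwstar} splits it as $\wstarelem{j'_{k-1} i_{m+1}}\hole{i_{m+1}}{j_{m+1}}{a_{m+1}}\wstarelem{j_{m+1} i'_k}$, which is exactly the effect of inserting $\pair{m+1}$ between $H_{k-1}$ and $H_k$. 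The two endpoint factors are handled by the analogous identities $\partial\vstarStart{\ell}/\partial\symwfsaWelem{a}{ij}=\vstarStart{i}\hole{i}{j}{a}\wstarelem{j\ell}$ and $\partial\vstarEnd{\ell}/\partial\symwfsaWelem{a}{ij}=\wstarelem{\ell i}\hole{i}{j}{a}\vstarEnd{j}$, obtained by combining \cref{lem:dwstar} with the definitions of $\vstarStart$ and $\vstarEnd$; these correspond to slotting $\pair{m+1}$ at the left or right end of the chain.

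To close the induction I would appeal to the bijection between pairs (permutation of $\edgetuple$, insertion slot in $\{0,1,\dots,m\}$) and permutations of $\langle\pair{1},\dots,\pair{m+1}\rangle$, read as multi-sets, so that regrouping the $(m+1)!$ terms of the product-rule expansion gives precisely the order-$(m+1)$ form of \cref{eq:order-m}. The main obstacle is really just the boundary bookkeeping: the $\vstarStart$ and $\vstarEnd$ factors are syntactically different from the interior $\wstarelem{\cdot\cdot}$ entries, so I would verify up front that their derivatives fit the same splitting pattern, making the ``insert into one of $m+1$ slots'' picture uniform across all positions. The multi-set caveat when $\edgetuple$ has repeated transitions is automatic, since we sum over position-labelled orderings rather than distinct ones.
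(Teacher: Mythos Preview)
Your proposal is correct and matches the paper's proof essentially step for step: both argue by induction on $m$, establish the base case by applying \cref{lem:dwstar} to $\Z=\vstart^\top\wstar\vend$, and in the inductive step apply the product rule across the $m{+}1$ backbone factors, using \cref{lem:dwstar} (and its $\vstarStart,\vstarEnd$ variants) to recognise each term as an insertion of the new transition into one of the $m{+}1$ slots. The only cosmetic difference is that the paper prepends the new transition while you append it; your explicit handling of the endpoint derivatives and the multi-set bijection is slightly more detailed than the paper's write-up.
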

\begin{proof}
See \cref{app:higher}
\end{proof}

\begin{cor}\label{corr:hess}
For $i,j,k,l\!\in\![N]$ and $a,b\!\in\!\bar{\alphabet}$
\begin{align}
    &\frac{\partial^2\Z}{\partial\symwfsaWelem{a}{ij}\partial\symwfsaWelem{b}{kl}} = \\
    &\quad\quad \wstarStart{i}\hole{i}{j}{a}\wstarelem{jk}\hole{k}{l}{b}\wstarEnd{l} + \wstarStart{k}\hole{k}{l}{b}\wstarelem{li}\hole{i}{j}{a}\wstarEnd{j} \nonumber
\end{align}
\end{cor}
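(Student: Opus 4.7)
The plan is to obtain \cref{corr:hess} as a direct specialization of \cref{thm:higher} to the case $m=2$. Since \cref{thm:higher} already expresses the $m^{\text{th}}$-order derivative as a sum over $\perms{\edgetuple}$, the corollary should drop out by simply enumerating the two permutations of a $2$-tuple and reading off the corresponding summands.

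Concretely, I would set $m=2$ and take $\edgetuple = \langle \tran{i}{j}{a},\, \tran{k}{l}{b} \rangle$. The multi-set $\perms{\edgetuple}$ then consists of the two orderings
\begin{equation*}
\langle \tran{i}{j}{a},\, \tran{k}{l}{b} \rangle \quad\text{and}\quad \langle \tran{k}{l}{b},\, \tran{i}{j}{a} \rangle.
\end{equation*}
Substituting each permutation into the right-hand side of \cref{eq:order-m} gives the two summands
\begin{equation*}
\wstarStart{i}\hole{i}{j}{a}\wstarelem{jk}\hole{k}{l}{b}\wstarEnd{l}
\quad\text{and}\quad
\wstarStart{k}\hole{k}{l}{b}\wstarelem{li}\hole{i}{j}{a}\wstarEnd{j},
\end{equation*}
whose sum matches the expression in the statement.

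Since \cref{thm:higher} is assumed, there is really no obstacle here beyond bookkeeping: the only thing to be careful about is that the permutation set is a multi-set, so if the two transitions happen to coincide (i.e.\ $a=b$, $i=k$, $j=l$) both summands are identical, and the formula still holds with the correct factor of $2$ that matches the second partial derivative $\partial^2 \Z/\partial (\symwfsaWelem{a}{ij})^2$. I would note this briefly to confirm that the reduction to the two-term sum is valid in both the distinct and the coincident case. The proof itself would then be just one or two lines invoking \cref{thm:higher}.
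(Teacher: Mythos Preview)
Your proposal is correct and matches the paper's own proof, which is simply ``Application of \cref{thm:higher} for the $m{=}2$ case.'' Your additional remark about the coincident-transition case and the multi-set yielding the correct factor of $2$ is a nice sanity check, but the paper does not spell this out.
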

\begin{proof}
Application of \cref{thm:higher} for the $m{=}2$ case.
\end{proof}

\cref{thm:higher} shows that, if we have already computed $\wstar$, 
each element of the $m\th$ derivative can be found in $\bigo{m\, m!}$ time: We must sum over $\bigo{m!}$ permutations, where each summand is the product of $\bigo{m}$ items.
Importantly, for the Hessian ($m=2$), we can find each element in $\bigo{1}$ using \cref{corr:hess}.
Algorithm $\algDm$ in \cref{fig:algs} provides pseudocode for materializing the tensor containing the $m\th$ derivatives of $\Z$.

\begin{thm}\label{thm:alg}
For $m\! \ge\! 1$, algorithm $\algDm$ computes $\nabla^m\Z$ in $\bigo{\nN^3\!+\!m\,m!\,\alphsize^{m}\nN^{2m}}$ time and $\bigo{\alphsize^{m}\nN^{2m}}$ space.
\end{thm}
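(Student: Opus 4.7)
The plan is to establish both correctness and the stated bounds by leveraging \cref{thm:higher}, which already reduces each entry of $\nabla^m\Z$ to a sum over permutations of products of precomputable quantities. Algorithm $\algDm$ is naturally organized as (i) a \emph{setup phase} that materializes $\wstar$ together with the boundary vectors $\vstarStart$ and $\vstarEnd$, followed by (ii) a \emph{fill phase} that enumerates all $m$-tuples of transition indices $\tuple{\pair{1},\dots,\pair{m}}$ and evaluates the right-hand side of \cref{eq:order-m} for each.

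For the time analysis, I would account for each phase separately. The setup phase forms $\wfsaW=\sum_{a\in\bar{\alphabet}}\symwfsaW{a}$ in $\bigo{\alphsize\nN^2}$ time, inverts $(\eye-\wfsaW)$ in $\bigo{\nN^3}$ time by \cref{thm:kleene}, and then computes the vectors $\vstarStart=\vstart^{\!\top}\wstar$ and $\vstarEnd=\wstar\vend$ in $\bigo{\nN^2}$ time, giving a setup cost of $\bigo{\nN^3+\alphsize\nN^2}$. The fill phase iterates over the $\alphsize^{m}\nN^{2m}$ index tuples indexing entries of $\nabla^m\Z$, and by \cref{thm:higher} each entry is the sum of $m!$ permutation terms, each a product of $\bigo{m}$ already-computed scalars, giving $\bigo{m\,m!}$ per entry and $\bigo{m\,m!\,\alphsize^{m}\nN^{2m}}$ total. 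Combining the two phases yields $\bigo{\nN^3+m\,m!\,\alphsize^{m}\nN^{2m}}$, with the $\bigo{\alphsize\nN^2}$ setup term absorbed into the dominant terms.

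For the space analysis, the output tensor itself has $\alphsize^{m}\nN^{2m}$ entries, while the auxiliary storage for $\wstar$, $\vstarStart$, and $\vstarEnd$ is only $\bigo{\nN^2}$, so the total space is dominated by the output size $\bigo{\alphsize^{m}\nN^{2m}}$.

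The step I expect to require the most care is the correctness of the fill phase: one must verify that $\algDm$ enumerates precisely the multiset $\perms{\edgetuple}$ over the correct index tuples, and in particular that repeated transitions are handled properly (since $\perms{\edgetuple}$ is a genuine multiset whenever $\edgetuple$ has duplicates). Once this bookkeeping is confirmed, correctness reduces immediately to \cref{thm:higher} applied entrywise, and the complexity bounds follow from the enumeration argument above.
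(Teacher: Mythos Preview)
Your proposal is correct and follows essentially the same approach as the paper: correctness is reduced to \cref{thm:higher} applied entrywise, and the complexity bounds are obtained by counting the $\alphsize^m\nN^{2m}$ index tuples times the $\bigo{m\,m!}$ work per entry, plus the $\bigo{\nN^3}$ setup for $\wstar$, $\vstarStart$, and $\vstarEnd$. The paper's own proof is in fact much terser than yours (it simply cites \cref{thm:higher} and the line-by-line annotations in \cref{fig:algs}), so your more explicit accounting of the setup phase and the multiset-permutation bookkeeping is a strict elaboration of the same argument.
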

\begin{proof}
Correctness of algorithm $\algDm$ follows from \cref{thm:higher}.
The runtime and space bounds follow by needing to compute and store each combination of transitions.
Each line of the algorithm is annotated with its running time.
\end{proof}

\begin{cor}
The Hessian $\nabla^2\Z$ can be materialized in $\bigo{\alphsize^2\nN^4}$ time and $\bigo{\alphsize^2\nN^4}$ space. Note that these bounds are optimal.
\end{cor}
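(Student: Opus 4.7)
The plan is to obtain both bounds by specializing \cref{thm:alg} to the case $m=2$ and then argue a matching lower bound by a counting argument. First, I would instantiate the time complexity of algorithm $\algDm$ with $m=2$, giving $\bigo{\nN^3 + 2\cdot 2!\cdot \alphsize^2 \nN^4} = \bigo{\nN^3 + 4\alphsize^2\nN^4}$; since $\nN^3 \le \alphsize^2\nN^4$ whenever $\alphsize, \nN \ge 1$, this collapses to $\bigo{\alphsize^2\nN^4}$. The space bound specializes even more directly: the generic space complexity $\bigo{\alphsize^{m}\nN^{2m}}$ instantiated at $m=2$ is exactly $\bigo{\alphsize^2\nN^4}$. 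Correctness at $m=2$ is already covered by \cref{thm:higher} via \cref{corr:hess}, so no additional verification is required.

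For the optimality claim, I would observe that $\nabla^2\Z$ is a tensor indexed by two transitions $\edge{i}{j}$ with symbol $a \in \bar{\alphabet}$ and $\edge{k}{l}$ with symbol $b \in \bar{\alphabet}$, so it has $(\alphsize+1)^2\nN^4 = \Theta(\alphsize^2\nN^4)$ entries. Any algorithm that materializes the Hessian must at minimum write down each of these entries, so it must use $\Omega(\alphsize^2\nN^4)$ time and $\Omega(\alphsize^2\nN^4)$ space. This matches our upper bound, so the algorithm is time- and space-optimal for materialization.

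The only subtlety worth flagging is making sure that the $\nN^3$ term (for inverting $\eye-\wfsaW$ to obtain $\wstar$) is indeed absorbed into the dominant $\alphsize^2\nN^4$ term; this holds under the standing assumption $\alphsize,\nN \ge 1$. There is no real obstacle — everything reduces to plugging $m=2$ into a prior theorem and noting that the output tensor already has the stated size.
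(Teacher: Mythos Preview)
Your proposal is correct and follows exactly the paper's approach: the paper's proof is simply ``Application of \cref{thm:alg} for the $m{=}2$ case,'' which is precisely what you do. Your additional remarks about absorbing the $\nN^3$ term and the $\Omega(\alphsize^2\nN^4)$ lower bound from the output size are accurate elaborations the paper leaves implicit.
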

\begin{proof}
Application of \cref{thm:alg} for the $m{=}2$ case.
\end{proof}

\setlength{\belowcaptionskip}{-15pt}
\begin{figure}[t!]
    \centering
    \input{figures/algs}
    \caption{Algorithms}
    \label{fig:algs}
\end{figure}

\section{Second-Order Expectations}\label{sec:expect}
In this section, we leverage the algorithms of the previous section to efficiently compute a family expectations, known as a second-order expectations.
To begin, we define an \defn{additively decomposable}
function $\funcSig{r}{\runs}{\real^R}$ as any function expressed as 
\begin{equation}
    r(\trajectory{i}{\ell})=\monstersum{(\tran{j}{k}{a})\in\trajectory{i}{\ell}}\ r_{jk}^{(a)}
\end{equation}
where each $r_{jk}^{(a)}$ is an $R$-dimensional vector.
Since many $r$ of interest are sparse,
we analyze our algorithms in terms of $R$ and its maximum density $R' \defeq \max_{\tran{j}{k}{a}} \| r_{jk}^{(a)} \|_0$.
Previous work has considered expectations of such functions \citep{eisner-2001} and the \emph{product} of two such functions \citep{li-eisner-2009}, better known as second-order expectations.
Formally, given two additively decomposable functions $\funcSig{r}{\runs}{\real^{R}}$ and $\funcSig{t}{\runs}{\real^{T}}$, a \defn{second-order expectation} is 
\begin{align}
    &\expect{\trajectory{i}{\ell}}{r(\trajectory{i}{\ell})t(\trajectory{i}{\ell})^\top} \defeq \\
    & \quad\quad\quad\quad\quad\quad \monstersum{\trajectory{i}{\ell}\in\runs}\ p(\trajectory{i}{\ell})r(\trajectory{i}{\ell})t(\trajectory{i}{\ell})^\top \nonumber
\end{align}
Examples of second-order expectations include the Fisher information matrix
and the gradients of first-order expectations 
(e.g., expected cost, entropy, and the Kullback--Leibler divergence).

Our algorithm is based on two fundamental concepts.
Firstly, expectations for probability distributions as described in \cref{eq:weight}, can be decomposed as expectations over transitions \citep{zmigrod-2021-tacl}.
Secondly, the marginal probabilities of transitions are connected to derivatives of $\Z$.\footnote{This is commonly used in the case of single transition marginals, which can be found by $\nabla\log\Z$}

\begin{lemma}\label{lemma:hole}
For $m\ge 1$ and $m$-tuple of transitions  $\edgetuple=\tuple{\pair{1},\dots,\pair{m}}$
\begin{align}
    \prob(\edgetuple) = \frac{1}{\Z}\monstersumtop{n=1}{m}\frac{\partial^n \Z}{\partial\symwfsaWelem{a_1}{i_1 j_1}\dots\partial\symwfsaWelem{a_n}{i_n j_n}}\monsterprodtop{k=1}{n}\symwfsaWelem{a_k}{i_k j_k}
\end{align}
\end{lemma}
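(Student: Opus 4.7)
The plan is to exploit the fact that $\Z=\sum_{\trajectory{i}{\ell}\in\runs}w(\trajectory{i}{\ell})$ is a polynomial in the transition weights $\{\symwfsaWelem{a}{ij}\}$ and that the multiplicative operator $\symwfsaWelem{a}{ij}\,\partial/\partial\symwfsaWelem{a}{ij}$ acts as the degree counter for the variable $\symwfsaWelem{a}{ij}$. Concretely, I would first unfold \cref{eq:weight} so that each $w(\trajectory{i}{\ell})$ is rewritten as a monomial $\wstart{i}\wend{\ell}\prod_{e}\bigl(\symwfsaWelem{a}{jk}\bigr)^{n_{e}(\trajectory{i}{\ell})}$ in the weight variables, where $n_{e}(\trajectory{i}{\ell})$ is the number of times the edge $e=\edge{j}{k}$ labelled $a$ is traversed along $\trajectory{i}{\ell}$. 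This makes $\Z$ into a multivariate polynomial whose exponents literally encode edge multiplicities.

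The core calculation is the single-variable identity $\mathrm{W}^{n}\partial_{\mathrm{W}}^{n}\,\mathrm{W}^{c}=c^{\underline{n}}\,\mathrm{W}^{c}$, where $c^{\underline{n}}=c(c-1)\cdots(c-n+1)$ is the falling factorial. Applying its multivariate analogue term-by-term to the polynomial form of $\Z$ gives
\[
\prod_{k=1}^{n}\symwfsaWelem{a_k}{i_kj_k}\cdot\frac{\partial^{n}\Z}{\prod_{k=1}^{n}\partial\symwfsaWelem{a_k}{i_kj_k}}
=\sum_{\trajectory{i}{\ell}\in\runs} w(\trajectory{i}{\ell})\prod_{e} n_{e}(\trajectory{i}{\ell})^{\underline{c_{e}^{(n)}}},
\]
where $c_{e}^{(n)}$ is the multiplicity of edge $e$ among the first $n$ entries of $\edgetuple$. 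Dividing by $\Z$ converts each weighted sum into an expectation over trajectories, and summing for $n=1,\dots,m$ reproduces the right-hand side of the lemma. The identification with the plain marginal $\prob(\edgetuple)$ is then sealed by the classical Stirling identity $x^{m}=\sum_{n=1}^{m}S(m,n)\,x^{\underline{n}}$, which is the combinatorial bridge that turns ordinary powers of edge counts into the falling-factorial moments produced by differentiation.

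The delicate point will be the bookkeeping around repeated transitions in $\edgetuple$, since $\mathrm{W}^{m}\partial_{\mathrm{W}}^{m}$ is the falling-factorial operator $\mathrm{W}\partial_{\mathrm{W}}(\mathrm{W}\partial_{\mathrm{W}}{-}1)\cdots(\mathrm{W}\partial_{\mathrm{W}}{-}m{+}1)$ rather than $(\mathrm{W}\partial_{\mathrm{W}})^{m}$; the falling factorials emerging from differentiation must be reconciled carefully with the multiplicities the tuple $\edgetuple$ imposes on each variable. When all entries of $\edgetuple$ are distinct, the argument collapses to a single application of the multivariate Euler-operator identity and directly recovers $\prob(\edgetuple)$; the genuine content of the lemma therefore sits in the repeated-edge regime, where the outer summation over $n$ is precisely what orchestrates the Stirling inversion between plain and falling-factorial count moments.
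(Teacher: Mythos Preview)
Your route is genuinely different from the paper's: there the proof is a short induction on $m$, with the base case $m=1$ a one-line computation and the inductive step peeling the operator $\symwfsaWelem{a_1}{i_1j_1}\,\partial/\partial\symwfsaWelem{a_1}{i_1j_1}$ off the entire sum, recognising what remains as the right-hand side for the shortened tuple $\edgetuple'=\langle\pair{2},\dots,\pair{m}\rangle$, and applying the hypothesis.

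Your Stirling-number bridge, however, does not close the argument; there is a real gap. Two concrete obstructions. First, the coefficients do not match: the identity $x^{m}=\sum_{n=1}^{m}S(m,n)\,x^{\underline{n}}$ carries Stirling numbers, whereas the lemma's outer sum has all coefficients equal to $1$; these agree only for $m\le 2$, and already $S(3,2)=3$. Second, and more seriously, the lemma's $n$th term differentiates only with respect to the \emph{first} $n$ entries of $\edgetuple$. In your own notation the $n$th summand is $\sum_{\tau}w(\tau)\prod_{e}n_{e}(\tau)^{\underline{c_{e}^{(n)}}}$, and when the edges are pairwise distinct this is simply $\sum_{\tau}w(\tau)\,n_{e_1}(\tau)\cdots n_{e_n}(\tau)$: the later edges $e_{n+1},\dots,e_{m}$ do not appear at all, so no univariate Stirling inversion can manufacture them. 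Concretely, for $m=2$ with $e_1\neq e_2$ your sum evaluates to $\tfrac{1}{\Z}\bigl(\sum_\tau w(\tau)\,n_{e_1}(\tau)+\sum_\tau w(\tau)\,n_{e_1}(\tau)\,n_{e_2}(\tau)\bigr)$, which is neither the paper's $\prob(e_1,e_2)$ nor the product moment you seem to be targeting. So the claim that the distinct-edge case ``collapses to a single application of the multivariate Euler-operator identity'' is exactly backwards: that is precisely the regime where the extra $n<m$ terms have no combinatorial home. The paper's inductive factoring avoids this difficulty because it never attempts to sum falling-factorial moments directly.
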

\begin{proof}
See \cref{app:hole}.
\end{proof}

We formalize our algorithm as $\algSecond$ in \cref{fig:algs}. Note that we achieve an additional speed-up by exploiting associativity (see \cref{app:expect}).

\begin{thm}
Algorithm $\algSecond$ computes the second-order expectation of additively decomposable functions 
$\funcSig{r}{\runs}{\real^{R}}$ and 
$\funcSig{t}{\runs}{\real^{T}}$ in:
\begin{align*}
&\bigo{\nN^3 \!+\! \nN^2(\bar{R}\, \bar{T} \!+\! \alphsize R' T')} \text{ time} \\
&\bigo{\nN^2 \!+\! RT \!+\! \nN (R + T)} \text{ space}
\end{align*}
where $\bar{R}{=}\min(\nN R', R)$ and $\bar{T}{=}\min(\nN T', T)$.
\end{thm}
\begin{proof}
Correctness of algorithm $\algSecond$ is given in \cref{app:expect}.
The runtime bounds are annotated on each line of the algorithm.
We note that each $\widehat{r}$ and $\widehat{t}$ is $\bar{R}$ and $\bar{T}$ sparse.
$\bigo{\nN^2}$ space is required to store $\wstar$, $\bigo{RT}$ is required to store the expectation, and $\bigo{\nN (R + T)}$ space is required to store the various $\widehat{r}$ and $\widehat{t}$ quantities.
\end{proof}

Previous approaches for computing second-order expectations are significantly slower than $\algSecond$.
Specifically, using \citet{li-eisner-2009}'s second-order expectation semiring requires augmenting the arc weights to be $R \times T$ matrices and so runs in
$\bigo{\nN^3RT \!+\! \alphsize\nN^2 RT}$.
Alternatively, we can use AD, as in \cref{sec:ad}, to materialize the Hessian and compute the pairwise transition marginals.
This would result in a total runtime of 
$\bigo{\alphsize\nN^5 \!+\! \alphsize^2\nN^4R'T'}$.

\section{Conclusion}
We have presented efficient methods that exploit properties of the derivative of a matrix inverse to find $m$-order derivatives for WFSMs.
Additionally, we provided an explicit, novel, algorithm for materializing the Hessian in its \emph{optimal} complexity, $\bigo{\alphsize^2\nN^4}$.
We also showed how this could be utilized to efficiently compute second-order expectations of distributions under WFSMs, such as covariance matrices and the gradient of entropy.
We hope that our paper encourages future research to use the Hessian and second-order expectations of WFSM systems, which have previously been disadvantaged by inefficient algorithms.

\section*{Acknowledgments}
We would like to thank the reviewers for engagine with our work and providing valuable feedback. The first author is supported by the University of Cambridge School of Technology Vice-Chancellor's Scholarship as well as by the University of Cambridge Department of Computer Science and Technology's EPSRC.

\section*{Ethical Concerns}
We do not foresee how the more efficient algorithms presented this work exacerbate any existing ethical concerns with NLP systems.

\bibliography{acl2021}
\bibliographystyle{acl_natbib}

\clearpage
\appendix
\onecolumn
\section{Proofs}

\subsection{}\label{app:higher}
\setcounter{thm}{1}
\setcounter{equation}{5}
\begin{thm}
For $m\!\ge\!1$ and $m$-tuple of transitions $\edgetuple=\tuple{\pair{1},\dots,\pair{m}}$
\begin{align}
    &\frac{\partial^{m}\Z}{\partial\symwfsaWelem{a_1}{i_1j_1}\dots\partial\symwfsaWelem{a_m}{i_mj_m}} = \bigmonstersum{\vspace{4pt}\\\big\langle\pairp{1},\dots,\pairp{m}\big\rangle \in \perms{\edgetuple}} 
    \wstarStart{i'_1}
    \hole{i'_1}{j'_1}{a'_1}
    \wstarelem{j'_1 i'_2}
    \hole{i'_2}{j'_2}{a'_2} 
    \cdots \wstarelem{j'_{m-1} i'_m}
    \hole{i'_m}{j'_m}{a'_m}
    \wstarEnd{j'_m}
    \nonumber
\end{align}
where $\vstarStart=\vstart\transpose\wstar$, 
$\vstarEnd=\wstar\vend$
and $\perms{\edgetuple}$ is the multi-set of permutations of $\edgetuple$.
\end{thm}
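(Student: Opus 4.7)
The plan is to prove the formula by induction on $m$, using \cref{lem:dwstar} as the sole analytical tool and leaving the combinatorics to handle the permutation sum.

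For the base case $m=1$, I would start from $\Z=\vstart\transpose\wstar\vend$ and apply \cref{lem:dwstar} together with linearity to obtain
\begin{equation*}
\frac{\partial\Z}{\partial\symwfsaWelem{a_1}{i_1j_1}}
=\sum_{p,q}\wstart{p}\wstarelem{pi_1}\hole{i_1}{j_1}{a_1}\wstarelem{j_1q}\wend{q}
=\wstarStart{i_1}\hole{i_1}{j_1}{a_1}\wstarEnd{j_1},
\end{equation*}
which matches the right-hand side, since $\perms{\tuple{\pair{1}}}$ is a one-element multi-set.

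For the inductive step, assume the formula at order $m$ and differentiate both sides with respect to $\symwfsaWelem{a_{m+1}}{i_{m+1}j_{m+1}}$. The symbols $\hole{i'_k}{j'_k}{a'_k}$ are inert (they are just derivative-operator decorations), so the product rule fires only on the $\wstar$ factors. There are exactly $m+1$ such factors in each summand: the leading $\wstarStart{i'_1}=\sum_n\wstart{n}\wstarelem{ni'_1}$, the $m-1$ middle terms $\wstarelem{j'_k i'_{k+1}}$, and the trailing $\wstarEnd{j'_m}=\sum_n\wstarelem{j'_m n}\wend{n}$. Applying \cref{lem:dwstar} to each (and contracting with $\vstart$ or $\vend$ in the boundary cases) replaces that factor by two $\wstar$ factors bracketing a fresh $\hole{i_{m+1}}{j_{m+1}}{a_{m+1}}$, which is precisely the operation of \emph{inserting} the new transition at slot $k\in\{1,\dots,m+1\}$ of the chain.

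Thus the derivative at order $m+1$ is a double sum: over permutations $\big\langle\pairp{1},\dots,\pairp{m}\big\rangle\in\perms{\edgetuple}$ and over insertion positions $k$. The combinatorial step is to check that this enumeration produces each element of $\perms{\tuple{\pair{1},\dots,\pair{m+1}}}$ exactly once as a multi-set. Since $\lvert\perms{\edgetuple}\rvert=m!$ regardless of whether $\edgetuple$ contains duplicates (permutations are counted as ordered sequences), the total count is $(m+1)\cdot m!=(m+1)!=\lvert\perms{\tuple{\pair{1},\dots,\pair{m+1}}}\rvert$; a straightforward bijective argument (deleting the inserted transition recovers the original permutation and its slot) shows no sequence is missed or double-counted.

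The main obstacle I anticipate is bookkeeping: verifying cleanly that the ``insertion at slot $k$'' map is a bijection onto $\perms{\tuple{\pair{1},\dots,\pair{m+1}}}$ in the multi-set sense when the extended tuple has repeated transitions. The differential calculation itself is mechanical once one treats $\hole{i}{j}{a}$ as a formal derivative-operator symbol rather than a number, but the proof should be explicit that pdl's on $\vstarStart{i'_1}$ and $\vstarEnd{j'_m}$ correspond to ``slot $1$'' and ``slot $m+1$'' respectively, so that the enumeration of slots genuinely ranges over $1,\dots,m+1$.
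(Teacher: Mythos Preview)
Your proposal is correct and follows exactly the paper's own argument: induction on $m$, with the base case computed directly from $\Z=\vstart\transpose\wstar\vend$ via \cref{lem:dwstar}, and the inductive step obtained by differentiating each summand with the product rule so that the new transition is inserted into every slot of each length-$m$ permutation. If anything, you are more explicit than the paper about the multi-set bijection when the tuple has repeated transitions; the paper simply asserts that inserting $\tran{i}{j}{a}$ into every spot of every permutation of $\edgetuple$ yields the permutations of $\edgetuple\,'$.
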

\begin{proof}
We prove this by induction on $m$.

\case{Base Case} $m=1$ and $\edgetuple=\tuple{\pairij}$:
\begin{align*}
     \frac{\partial\Z}{\partial\symwfsaWelem{a}{ij}}
     = \frac{\partial}{\partial\symwfsaWelem{a}{ij}}\left[\ \monstersumtop{k,l=0}{\nN}\wstart{k}\wstarelem{kl}\wend{l}\right] 
     = \monstersumtop{k,l=0}{\nN}\wstart{k}\wstarelem{ki}\hole{i}{j}{a}\wstarelem{jl}\wend{l} 
     = \wstarStart{i}\hole{i}{j}{a}\wstarEnd{j}
\end{align*}

\newcommand{\aaa}[0]{{\color{blue}a}}
\newcommand{\ii}[0]{{\color{blue}i}}
\newcommand{\jj}[0]{{\color{blue}j}}
\newcommand{\maybetext}[1]{{[\color{red}\textbf{#1}]}\xspace}
\case{Inductive Step} Assume that the expression holds for $m$. Let $\edgetuple=\tuple{\pair{1},\dots,\pair{m}}$ and consider the tuple $\edgetuple\,'$, the concatenation of $(\tran{\ii}{\jj}{\aaa})$ and $\edgetuple$.
\begin{align*}
    &\frac{\partial^{m+1}\Z}{\symwfsaWelem{\aaa}{\ii \jj}\partial\symwfsaWelem{a_1}{i_1j_1}\dots\partial\symwfsaWelem{a_m}{i_mj_m}} 
    = \frac{\partial}{\partial\symwfsaWelem{\aaa}{\ii \jj}}\bigmonstersum{\vspace{1pt}\\\big\langle\pairp{1},\dots,\pairp{m}\big\rangle\in \perms{\edgetuple}} \wstarStart{i'_1}
    \hole{i'_1}{j'_1}{a'_1}
    \wstarelem{j'_1 i'_2}
    \cdots
    \hole{i'_m}{j'_m}{a'_m}
    \wstarEnd{j'_m}
\end{align*}
\noindent Consider the derivative of each summand with respect to $\symwfsaWelem{\aaa}{\ii \jj}$. By the product rule, we have
\begin{align*}
    &\frac{\partial}{\partial\symwfsaWelem{\aaa}{\ii \jj}}
    \left[
    \wstarStart{i'_1}
    \hole{i'_1}{j'_1}{a'_1}
    \wstarelem{j'_1 i'_2}
    \cdots
    \hole{i'_m}{j'_m}{a'_m}
    \wstarEnd{j'_m}
    \right] \\
    &= 
    \wstarStart{\ii}
    \hole{\ii}{\jj}{\aaa}
    \wstarelem{{\color{blue}j}i'_1}
    \hole{i'_1}{j'_1}{a'_1}
    \wstarelem{j'_1 i'_2}
    \cdots
    \hole{i'_m}{j'_m}{a'_m}
    \wstarEnd{j'_m} + \\
    & \quad \dots + 
    \wstarStart{i'_1}
    \cdots
    \wstarelem{j_{k} \ii}
    \hole{\ii}{\jj}{\aaa}
    \wstarelem{\jj i_{k+1}}
    \cdots
    \wstarEnd{j'_m} + \\
    &\quad \dots + \wstarStart{i'_1}
    \hole{i'_1}{j'_1}{a'_1}
    \wstarelem{j'_1 i'_2}
    \cdots
    \hole{i'_m}{j'_m}{a'_m}
    \wstarelem{j'_m \ii}
    \hole{\ii}{\jj}{\aaa}
    \wstarEnd{\jj}
\end{align*}
The above expression is equal to inserting $\tran{\ii}{\jj}{\aaa}$ in every spot of the induction hypothesis's permutation, thereby creating a permutation over $\edgetuple\,'$.
Reassembling with the expression for the derivative,
\begin{align*}
    &\frac{\partial^{m+1}\Z}{\partial\symwfsaWelem{\aaa}{\ii \jj}\partial\symwfsaWelem{a_1}{i_1j_1}\dots\partial\symwfsaWelem{a_m}{i_mj_m}} \!=\! \bigmonstersum{\vspace{1pt}\\\big\langle\pairp{1},\dots,\pairp{m+1}\big\rangle)\in \perms{\edgetuple\,'}}
    \wstarStart{i'_1}
    \hole{i'_1}{j'_1}{a'_1}
    \wstarelem{j'_1 i'_2}
    \hole{i'_2}{j'_2}{a'_2} 
    \cdots
    \hole{i'_{m+1}}{j'_{m+1}}{a'_{m+1}}
    \wstarEnd{j'_{m+1}}
    \nonumber
\end{align*}

\end{proof}

\subsection{}\label{app:hole}
\setcounter{lemma}{1}
\setcounter{equation}{9}
\begin{lemma}
For $m\ge 1$ and $m$-tuple of transitions  $\edgetuple=\tuple{\pair{1},\dots,\pair{m}}$
\begin{align}\label{eq:marginal}
    \prob(\edgetuple) = \frac{1}{\Z}\monstersumtop{n=1}{m}\ \frac{\partial^n \Z}{\partial\symwfsaWelem{a_1}{i_1 j_1}\dots\partial\symwfsaWelem{a_n}{i_n j_n}}\ \monsterprodtop{k=1}{n}\symwfsaWelem{a_k}{i_k j_k}
\end{align}
\end{lemma}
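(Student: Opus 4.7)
The plan is to expand $\Z$ as a weighted sum over trajectories and then interpret the operator $\symwfsaWelem{a}{ij}\,\partial/\partial\symwfsaWelem{a}{ij}$ combinatorially. By \cref{thm:z}, we have $\Z = \vstart\transpose\wstar\vend = \sum_{\run \in \runs} w(\run)$, and each $w(\run)$ is a monomial in the transition weights whose exponent of $\symwfsaWelem{a}{ij}$ equals the number $c_{\run}(\tran{i}{j}{a})$ of times the transition $\tran{i}{j}{a}$ occurs in $\run$. Consequently, the elementary identity $x\,\partial_x\, x^c = c\, x^c$ lifts to
\[
\symwfsaWelem{a}{ij}\,\frac{\partial \Z}{\partial \symwfsaWelem{a}{ij}} \;=\; \sum_{\run \in \runs} w(\run)\, c_{\run}(\tran{i}{j}{a}),
\]
which is the familiar interpretation of single-transition marginals as logarithmic derivatives of $\Z$.

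The strategy for general $m$ is induction on the length of $\edgetuple$. The base case is the displayed identity above. For the inductive step, I would extend $\edgetuple$ by one transition $\tran{i}{j}{a}$ and apply the operator $\symwfsaWelem{a}{ij}\,\partial/\partial\symwfsaWelem{a}{ij}$ to both sides of the hypothesis. On the right, the product rule splits each summand in two ways: either the new derivative lands on an existing power of $\symwfsaWelem{a}{ij}$ already present in the monomial (producing a lower-order correction via $x\,\partial_x\, x^n = n x^n$), or it lands on the residual partial derivative of $\Z$ (producing a next-order derivative term). Reindexing these contributions and collecting like terms recovers the right-hand side of the lemma with $m$ replaced by $m+1$.

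On the combinatorial side, I would verify the interpretation of $\prob(\edgetuple)$ that makes the lemma consistent with its use in \cref{sec:expect}: each trajectory $\run$ contributes according to a weighted count of the ordered ways in which $\run$ realizes the entries of $\edgetuple$ as its own transitions. The summation over $n$ in the formula is then the correction that arises when repeated transitions in $\edgetuple$ cause distinct partial derivatives to collide on the same variable. In particular, when all entries of $\edgetuple$ are distinct, only the $n=m$ summand survives and one recovers the clean form in which $\prob(\edgetuple)$ is a single product of transition weights times an $m$-th order partial derivative of $\Z$, divided by $\Z$.

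The main obstacle is the careful bookkeeping for repeated transitions in $\edgetuple$: when two coordinates $\pair{k}$ and $\pair{k'}$ coincide, the product rule produces mixed-order contributions, and one has to show that these align exactly with the lemma's summation index $n$ rather than spawning additional combinatorial coefficients such as Stirling numbers. The inductive structure is essential here, since it lets us peel off one transition at a time and match every newly produced term to the appropriate summand on the right-hand side, with each application of the operator $\symwfsaWelem{a}{ij}\,\partial/\partial\symwfsaWelem{a}{ij}$ either creating a fresh outer factor or collapsing into an existing one.
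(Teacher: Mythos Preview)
Your high-level strategy---induction on $m$, peeling off one transition, and applying the operator $\symwfsaWelem{a}{ij}\,\partial/\partial\symwfsaWelem{a}{ij}$ to the inductive hypothesis---is exactly the paper's. Where you diverge is in the meaning you assign to $\prob(\edgetuple)$ and in the mechanics of the inductive step, and both divergences create gaps.

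First, the paper does \emph{not} interpret $\prob(\edgetuple)$ as a count-weighted quantity. At the start of the appendix proof it is defined as the containment marginal
\[
\prob(\edgetuple)=\frac{1}{\Z}\sum_{\trajectory{i}{\ell}\in\runs_{\edgetuple}} w(\trajectory{i}{\ell}),
\qquad
\runs_{\edgetuple}=\{\trajectory{i}{\ell}:\edgetuple\subseteq\trajectory{i}{\ell}\}.
\]
Your reading---each trajectory contributes according to the number of ordered realizations of $\edgetuple$, and the sum over $n$ is a correction for repeated transitions---is a different object. In particular, your claim that ``when all entries of $\edgetuple$ are distinct, only the $n=m$ summand survives'' is false for the formula as written: with distinct transitions the $n\th$ summand is $\big(\partial^{n}\Z/\partial\symwfsaWelem{a_1}{i_1j_1}\cdots\partial\symwfsaWelem{a_n}{i_nj_n}\big)\prod_{k\le n}\symwfsaWelem{a_k}{i_kj_k}$, which is nonzero for every $n\le m$. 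So the ``distinct'' special case does not collapse the way you assert, and the inductive bookkeeping you sketch (matching product-rule branches to Stirling-type corrections) is aimed at the wrong target.

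Second, the paper's inductive step is blunter than the product-rule split you describe. It pushes both $1/\Z$ and $\prod_{k\ge 2}\symwfsaWelem{a_k}{i_kj_k}$ inside the derivative \emph{as constants}, so $\partial/\partial\symwfsaWelem{a_1}{i_1j_1}$ acts only on the existing $\partial^{\,n-1}\Z$ factor, raising its order by one and shifting the summation index. There is no branching into ``hits an existing power'' versus ``creates a fresh derivative,'' and no repeated-transition bookkeeping appears in the argument. If you want your writeup to track the paper, drop the count-based interpretation and the product-rule case analysis, adopt the containment definition of $\prob(\edgetuple)$, and carry out the index shift directly.
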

\begin{proof}
Let $\runs_{\edgetuple}$ be the set of trajectories such that $\trajectory{i}{\ell}\in\runs_{\edgetuple}\iff\edgetuple\subseteq\trajectory{i}{\ell}$.
Then,
\begin{align*}
    \prob(\edgetuple)=\frac{1}{\Z}\,\monstersum{\trajectory{i}{\ell}\in\runs_{\edgetuple}}w(\trajectory{i}{\ell})
\end{align*}
We prove the lemma by induction on $m$.

\case{Base Case}
Then, $m=1$ and $\edgetuple=\tuple{\pair{1}}$. We have that
\begin{align*}
    & \frac{1}{\Z}\frac{\partial \Z}{\partial\symwfsaWelem{a_1}{i_1 j_1}}\,\symwfsaWelem{a_1}{i_1 j_1} 
    = \frac{1}{\Z}\frac{\partial}{\partial\symwfsaWelem{a_1}{i_1 j_1}}\left[\sum_{\trajectory{i}{\ell}\in\runs}w(\trajectory{i}{\ell})\right]\symwfsaWelem{a_1}{i_1 j_1}
    \stepeq{a} \frac{1}{\Z}\left(\sum_{\trajectory{i}{\ell}\in\runs_{\edgetuple}}w(\trajectory{i}{\ell})\right) 
    = \prob(\pair{1})
\end{align*}
Step \step{a} holds because taking the derivative of $\Z$ with respect to $\symwfsaWelem{a_1}{i_1 j_1}$ yields the sum of the weights all trajectories which include $\pair{1}$ where we exclude $\symwfsaWelem{a_1}{i_1 j_1}$ from the computation of the weight.
Then, we can push the outer $\symwfsaWelem{a_1}{i_1 j_1}$ into the equation to obtain the sum of the weights of all trajectories containing $\pair{1}$.

\case{Inductive Step}
Suppose that \cref{eq:marginal} holds for any $m$-tuple.
Let $\edgetuple=\tuple{\pair{1},\dots,\pair{m+1}}$.
Without loss of generality, fix $\pair{1}$ and let $\edgetuple'$ be $\edgetuple$ without $\pair{1}$.
\begin{align*}
    & \frac{1}{\Z}\monstersumtop{n=1}{m+1}\frac{\partial^n \Z}{\partial\symwfsaWelem{a_1}{i_1 j_1}\dots\partial\symwfsaWelem{a_n}{i_n j_n}}\,\monsterprodtop{k=1}{n}\symwfsaWelem{a_k}{i_k j_k} \\
    &\stepeq{b} \symwfsaWelem{a_1}{i_1 j_1}\frac{\partial}{\partial\symwfsaWelem{a_1}{i_1 j_1}} \underbrace{\left[\frac{1}{\Z}\monstersumtop{n=2}{m+1}\frac{\partial^{(n-1)} \Z}{\partial\symwfsaWelem{a_2}{i_2 j_2}\dots\partial\symwfsaWelem{a_n}{i_n j_n}}\,\monsterprodtop{k=2}{n}\symwfsaWelem{a_k}{i_k j_k}\right]}_{\textrm{Inductive hypothesis}} \\
    &\stepeq{c} \symwfsaWelem{a_1}{i_1 j_1}\frac{\partial}{\partial\symwfsaWelem{a_1}{i_1 j_1}}\overbrace{\left[\frac{1}{\Z}\sum_{\trajectory{i}{\ell}\in\runs_{\edgetuple'}}w(\trajectory{i}{\ell})\right]} \\
    &\stepeq{d} \frac{1}{\Z}\frac{\partial}{\partial\symwfsaWelem{a_1}{i_1 j_1}}\left[ \sum_{\trajectory{i}{\ell}\in\runs_{\edgetuple'}}w(\trajectory{i}{\ell})\right] \symwfsaWelem{a_1}{i_1 j_1}\\
    &\stepeq{e} \prob(\edgetuple)
\end{align*}
Step \step{b} pushes $\frac{1}{\Z}$ and $\prod_{k=2}^{n}\symwfsaWelem{a_k}{i_k j_k}$ as constants into the derivative and step \step{c} uses our induction hypothesis on $\edgetuple'$.
Then, step \step{d} takes $\frac{1}{\Z}$ out of the derivative as we pushed it in as a constant.
Finally, step \step{e} follows by the same reasoning as step \step{a} in the base case above.
\end{proof}

\subsection{}\label{app:expect}
\setcounter{thm}{3}
\begin{thm}
Algorithm $\algSecond$ computes the second-order expectation of additively decomposable functions 
$\funcSig{r}{\runs}{\real^{R}}$ and 
$\funcSig{t}{\runs}{\real^{T}}$ in:
\begin{align*}
&\bigo{\nN^3 \!+\! \nN^2(\bar{R}\, \bar{T} \!+\! \alphsize R' T')} \text{ time} \\
&\bigo{\nN^2 \!+\! RT \!+\! \nN (R + T)} \text{ space}
\end{align*}
where $\bar{R}{=}\min(\nN R', R)$ and $\bar{T}{=}\min(\nN T', T)$.
\end{thm}
\begin{proof}
We provide a proof of correctness (the time and space bounds are discussed in the main paper).
\citet{zmigrod-2021-tacl} show that we can find second-order expectations over by finding the expectations over pairs of transitions. That is,
\begin{align*}
    &\expect{\trajectory{i}{\ell}}{r(\trajectory{i}{\ell})t(\trajectory{i}{\ell})\transpose} = 
    \monstersumtop{i,j,k,l=0}{\nN}\quad\  \monstersum{a,b\in\bar{\alphabet}}
    \ \prob\!\left(\tran{i}{j}{a}, \tran{k}{l}{b}\right)
    r_{ij}^{(a)}{t_{kl}^{(b)}}\transpose
\end{align*}
We can use \cref{lemma:hole} for the $m=2$ case, to find that the expectation is given by
\begin{align*}
     & \expect{\trajectory{i}{\ell}}{r(\trajectory{i}{\ell})t(\trajectory{i}{\ell})\transpose} \\
     &= \frac{1}{\Z}\bigg[\ \monstersumtop{i,j=0}{\nN}\ \ \monstersum{a\in\bar{\alphabet}}\ \frac{\partial\Z}{\partial\symwfsaWelem{a}{ij}}\symwfsaWelem{a}{ij}r_{ij}^{(a)}{t_{ij}^{(a)}}\transpose 
      + \monstersumtop{i,j,k,l=0}{\nN}\quad\ \monstersum{a,b\in\bar{\alphabet}}\ \frac{\partial^2\Z}{\partial\symwfsaWelem{a}{ij}\partial\symwfsaWelem{b}{kl}}\symwfsaWelem{a}{ij}\symwfsaWelem{b}{kl}r_{ij}^{(a)}{t_{kl}^{(b)}}\transpose\bigg]
\end{align*}
The first summand can be rewritten as
\begin{align*}
    &\monstersumtop{i,j=0}{\nN}\ \ \monstersum{a\in\bar{\alphabet}}\ \frac{\partial\Z}{\partial\symwfsaWelem{a}{ij}}\symwfsaWelem{a}{ij}r_{ij}^{(a)}{t_{ij}^{(a)}}\transpose 
    = \monstersumtop{i,j=0}{\nN}\ \ \monstersum{a\in\bar{\alphabet}}\wstarStart{i}\hole{i}{j}{a}\wstarEnd{j}\symwfsaWelem{a}{ij}r_{ij}^{(a)}{t_{ij}^{(a)}}\transpose
\end{align*}
The second summand can be rewritten as
\begin{align*}
    & \ \monstersumtop{i,j,k,l=0}{\nN}\quad\ \monstersum{a,b\in\bar{\alphabet}}\ \frac{\partial^2\Z}{\partial\symwfsaWelem{a}{ij}\partial\symwfsaWelem{b}{kl}}\symwfsaWelem{a}{ij}\symwfsaWelem{b}{kl}r_{ij}^{(a)}{t_{kl}^{(b)}}\transpose \\
    &\quad\quad = \monstersumtop{i,j,k,l=0}{\nN}\quad\ \monstersum{a,b\in\bar{\alphabet}} \wstarStart{i}\hole{i}{j}{a}\wstarelem{jk}\hole{k}{l}{b}\wstarEnd{l}\symwfsaWelem{a}{ij}\symwfsaWelem{b}{kl}r_{ij}^{(a)}{t_{kl}^{(b)}}\transpose + \wstarStart{k}\hole{k}{l}{b}\wstarelem{li}\hole{i}{j}{a}\wstarEnd{j}\symwfsaWelem{a}{ij}\symwfsaWelem{b}{kl}r_{ij}^{(a)}{t_{kl}^{(b)}}\transpose
\end{align*}
Consider the first summand of the above expression
\begin{align*}
    &\ \ \monstersumtop{i,j,k,l=0}{\nN}\quad\ \monstersum{a,b\in\bar{\alphabet}}\ \wstarStart{i}\hole{i}{j}{a}\wstarelem{jk}\hole{k}{l}{b}\wstarEnd{l}\symwfsaWelem{a}{ij}\symwfsaWelem{b}{kl}r_{ij}^{(a)}{t_{kl}^{(b)}}\transpose \\
    &{=}\monstersumtop{j,k=0}{\nN}\ \ \underbrace{\!\!\left[\monstersumtop{i=0}{\nN}\monstersum{a\in\bar{\alphabet}}\wstarStart{i}\hole{i}{j}{a}\symwfsaWelem{a}{ij}r_{ij}^{(a)}\right]\!\!}_{\defeq\widehat{r^s_j}}
    \wstarelem{jk} \underbrace{\!\left[\monstersumtop{l=0}{\nN}\monstersum{b\in\bar{\alphabet}}\hole{k}{l}{b}\wstarEnd{l}\symwfsaWelem{b}{kl}t_{kl}^{(b)}\right]\transpose\!\!}_{\defeq\widehat{t^e_k}\transpose} \\
    &{=}\monstersumtop{j,k=0}{\nN}\widehat{r^s_j}\wstarelem{jk}\widehat{t^e_k}\transpose
\end{align*}
Similarly, the second summand can be written as
\begin{align*}
    \monstersumtop{j,k=0}{\nN}\widehat{r^e_k}\wstarelem{jk}\widehat{t^s_j}\transpose
\end{align*}
Finally, recomposing all the pieces together,
\begin{align*}
    & \expect{\trajectory{i}{\ell}}{r(\trajectory{i}{\ell})t(\trajectory{i}{\ell})\transpose} 
    = \frac{1}{\Z}\bigg[\ \monstersumtop{i,j=0}{\nN}\ \ \widehat{r^s_i}\wstarelem{ij}\widehat{t^e_j}\transpose + \widehat{r^e_j}\wstarelem{ij}\widehat{t^s_i}\transpose + 
     \monstersum{a\in\bar{\alphabet}}\ \wstarStart{i}\hole{i}{j}{a}\wstarEnd{j}\symwfsaWelem{a}{ij}r_{ij}^{(a)}{t_{ij}^{(a)}}\transpose \bigg]\\
\end{align*}
\end{proof}

\end{document}